\newtheorem{theorem}{Theorem}
\newtheorem{definition}{Definition}
\newtheorem{proof}{Proof}
\ificcvfinal\pagestyle{empty}\fi
\begin{document}

\title{Exploring the Benefits of Visual Prompting in Differential Privacy}

\author[1]{Yizhe Li}
\author[2]{Yu-Lin Tsai}
\author[2]{Chia-Mu Yu}
\author[3]{Pin-Yu Chen}
\author[1]{Xuebin Ren}
\affil[1]{School of Computer Science and Technology, Xi'an Jiaotong University}
\affil[2]{National Yang Ming Chiao Tung University}
\affil[3]{IBM Research}

\maketitle
\ificcvfinal\thispagestyle{empty}\fi


\begin{abstract}
Visual Prompting (VP) is an emerging and powerful technique that allows sample-efficient adaptation to downstream tasks by engineering a well-trained frozen source model. In this work, we explore the benefits of VP in constructing compelling neural network classifiers with differential privacy (DP). We explore and integrate VP into canonical DP training methods and demonstrate its simplicity and efficiency. In particular, we discover that VP in tandem with PATE, a state-of-the-art DP training method that leverages the knowledge transfer from an ensemble of teachers, achieves the state-of-the-art privacy-utility trade-off with minimum expenditure of privacy budget. Moreover, we conduct additional experiments on cross-domain image classification with a sufficient domain gap to further unveil the advantage of VP in DP. Lastly, we also conduct extensive ablation studies to validate the effectiveness and contribution of VP under DP consideration. Our code is available at \url{https://github.com/EzzzLi/Prompt-PATE}.
\end{abstract}

\section{Introduction}\label{sec:intro}
Originating from the domain of deep learning for natural language processing, prompt engineering has gained significant popularity as an emergent technique for efficient adoption and adaptation of pre-trained language models for solving different downstream tasks \cite{liu2023pre}. In recent years, the notion of prompting has been extended to other domains and data modalities, especially in computer vision and images \cite{jia2022visual,bahng2022visual}. 
Specifically, the term \textit{visual prompting} (VP) has been coined by \cite{bahng2022visual}, and the authors show competitive accuracy of VP on some downstream image classification tasks over linear probing (i.e., attaching a trainable linear head to a pre-trained model) when used with a large vision model such as CLIP \cite{radford2021learning} (only the image encoder). It is worth noting that VP in \cite{bahng2022visual} can be viewed as a special case of \textit{model reprogramming} (MR) \cite{chen2022model} on a pre-trained model. MR inserts an input transformation layer and an output mapping layer into a pre-trained frozen model for fine-tuning downstream tasks. MR is equivalent to VP in \cite{bahng2022visual} when the input transformation is a trainable input perturbation and the output mapping is a specified source-target label correspondence or a set of text prompts for label inference (e.g., ``a photo of [predicted class]''). Throughout this paper, for ease of elucidation, we will use VP and MR interchangeably.

VP has been extensively studied for various use cases, ranging from image classification \cite{bahng2022visual}, enhancing adversarial robustness \cite{chen2022visual}, image-inpainting \cite{bar2022visual}, cross-domain adaptation \cite{tsai2020transfer,neekhara2022cross}, to name a few. In this paper, we explore yet another benefit of VP with pre-trained models -- deep learning with differential privacy (DP). In deep learning, scaling the training parameters of a neural network often leads to improved task performance (e.g., a classification model with higher accuracy) \cite{kaplan2020scaling}. However, with a DP budget, training a larger neural network usually means more consumption of data privacy \cite{luo2021scalable}. Motivated by this dilemma of the tradeoff between neural network capacity and DP, we aim to study the following fundamental question:
\begin{center}
    \textit{Can VP with a pre-trained model (trained on non-private data) improve the privacy-accuracy tradeoff in off-the-shelf DP-training mechanisms?}
\end{center}
In this paper, we give an affirmative answer to this question, validated through a comprehensive analysis and empirical comparisons. We purposely focus on existing DP-training mechanisms, in order to study the benefit of improved performance contributed by VP. Our proposed approach applies VP (at data inputs) to off-the-shelf DP-training mechanisms, together with a pre-trained model trained on non-private data. 
Particularly, when VP is used in PATE (Private Aggregation of Teacher Ensembles) \cite{pate}, a DP training mechanism, we show that the classification accuracy under a privacy constraint can achieve the current state-of-the-art performance (SOTA) (over 97\%) on the common benchmark of CIFAR-10 classification task. Furthermore, we also demonstrate that the performance increases with minimum expenditure of privacy budget. Consequently, our results uncovered new benefits of VP in DP and offer new use cases and insights into prompt engineering.

\noindent \textbf{Contribution.} We highlight our main contributions as follows. We are the first to explore the benefits of VP with pre-trained models in the design of DP classifiers. By leveraging VP, we present Prom-PATE as a training strategy for DP classifiers. While sophisticated backbones are usually difficult to be used in DP training, Prom-PATE has great flexibility in utilizing the high accuracy of the backbone without compromising privacy. Overall, Prom-PATE enjoys the following characteristics. Prom-PATE relies on VP to resolve the demand for huge data from PATE, improving practicality and accuracy. In the design, the public pre-trained models are utilized \textit{twice}, significantly growing the accuracy. Through extensive experiments, we demonstrate that Prom-PATE outperforms current DP classifiers on CIFAR-10, showing an accuracy 97.07\% under a privacy budget of $\epsilon=1.019$. We also show significant accuracy gain  of Prom-PATE in other datasets over existing methods.

\section{Related Work and Background}\label{sec:related}
\paragraph{Visual Prompting (VP) and Model Reprogramming (MR).} Both VP and MR focus on the problem setup of reusing a pre-trained model to perform a new task (either in-domain or cross-domain) without changing the model weights during fine-tuning (i.g., the pre-trained model is ``frozen''). 
MR was first studied through the lens of adversarial machine learning (ML). Elsayed et al. \cite{elsayed2018adversarial} showed that an attacker can ``steal'' an ML model's computation resource to perform another task without the model owner's consent. Later on, MR was shown to deliver competitive image classification results in data-limited and cross-domain settings \cite{tsai2020transfer,neekhara2022cross}, wherein the authors demonstrated the possibility of reusing a pre-trained model from a source domain (e.g., general image classifiers or language models) to solve challenging image classification problems in a target domain (e.g., bio-medical measurements). We refer the readers to the survey paper of MR in \cite{chen2022model} for more details. VP through a trainable (padded) universal input perturbation is revisited in \cite{bahng2022visual}, and the authors showed competitive results on some subset of 12 image classification tasks over linear probing and full fine-tuning on pre-trained image classifiers and the CLIP model \cite{radford2021learning}. Chen et al. \cite{chen2023understanding} improved VP by introducing iterative label mapping during training. Beyond image classification, VP was extended to image inpainting tasks \cite{bar2022visual}. In this paper, we note that we limit the scope of VP to input-level prompt engineering as studied in \cite{bahng2022visual,chen2022visual}, and we leave the broader notion of VP via injecting trainable token embeddings (e.g., the visual prompt tuning as in \cite{jia2022visual}) to different layers of a pre-trained model as future work.

\paragraph{Differentially Private Classifiers.} 
One of the most widely used techniques to achieve DP deep learning is DPSGD~\cite{abadi2016deep}, where DP noise is added to the clipped gradient updates during the training process. The definition and properties of DP are provided in the Supplementary Material. DPSGD suffers from information loss due to the fact that the gradient clipping and the noise scale are proportional to the norm of clipped gradient. Recent research~\cite{Chourasia2021DifferentialPD, Nasr2021AdversaryIL} finds that we may overestimate the privacy loss for DPSGD because the attacker does not have access to the gradient in each training iteration. One of the current trends in training a DP classifier is to privately fine-tune large pre-trained models such as BERT variants and GPT-2 \cite{Large-scale-private-learning, yu2022differentially, Large-Language-Models}. This private fine-tuning strategy can also be applied to the realm of images~\cite{mixedDP, luo2021scalable, Klause2022, tramer2020differentially, de2022unlocking, bu2022scalable}. For example, Tram{\`e}r and Boneh~\cite{tramer2020differentially} improved the model utility by conducting private fine-tuning with SimCLR features~\cite{simclr}. De et al.~\cite{de2022unlocking} also pre-trained the model with the public data. After that, they apply many techniques including large batch size and weight standardization to improve accuracy. Bu et al.~\cite{bu2022scalable}'s DP classifier relies on the notion of ghost clipping to calculate the clipped gradient required by DPSGD.  

PATE~\cite{pate, scalablepate} is another approach that trains a DP classifier. In PATE, the sensitive dataset is first partitioned into slices, with each \textit{teacher model} trained on a different slice of the data (through SGD). Then, the non-sensitive samples labeled by the DP noisy votes from teacher models are used to train a \textit{student model}, which turns out to be a DP classifier. Compared to DPSGD, fewer research efforts are put into the improvement of PATE. For example, Private-kNN~\cite{private-knn} relies on the private release of k-nearest neighbor (kNN) queries to avoid splitting the training set in PATE.  


\paragraph{Visual Prompting with DP.}
A recent work that combines VP and DP is Reprogrammable-FL~\cite{arifreprogrammable}.  Reprogrammable-FL is designed for DP federated learning (FL). More specifically, Reprogrammable-FL considers multiple clients, each with a common pre-trained model in each server-client interaction. The aim is to learn privatized visual prompts and label mappings for each client using DPSGD~\cite{abadi2016deep}, enabling DPFL with more efficient use of the privacy budget. Reprogrammable-FL outperforms methods that rely on private fine-tuning from pre-trained models, currently considered the standard for achieving high accuracy in DPFL. However, in each training round of Reprogrammable-FL, the update of visual prompts and label mapping for each client is still subject to clipped noisy gradient updates to ensure privacy. As a result, the overall performance may still degrade compared to the non-private setting of visual prompting~\cite{bahng2022visual}, as will be demonstrated in this paper.

\section{Main Approach}\label{sec:main}
In this section, we aim to investigate how VP can improve the privacy-utility trade-off of deep learning models. 

\paragraph{Notations.} As VP was originally proposed for model re-utilization, we denote a source model $f_{S}(\theta_{S}; x)$ which is trained from a large, source (pubic) dataset $D_{S}:= \{(x_{S}, y_{S})\}$ with $x_{S}$, where $x_{S}$ denotes the feature and $y_{S}$ denotes the label, both from the source domain. We denote our target (private) dataset $D_{T}:= \{(x_{T}, y_{T})\}$ with $x_{T}$ in which we re-utilize model $f_{S}(\theta_{S}; x)$ to accomplish the task in $D_{T}$ via VP without modifying the weights $\theta_{S}$.

\subsection{Design Challenges for DP Classifiers}\label{subsec:pate}
Though PATE outperforms DPSGD because of the reduced noise scale and no information loss from the gradient clipping, we identify three challenges for designing DP classifiers based on PATE. 
\begin{itemize}
\item \textbf{(C1)} The performance of PATE is sensitive to data partitioning. In particular, the teacher models may perform badly when the sensitive data is limited in size. As also shown in \cite{private-knn}, each teacher model has an accuracy under 50\% due to only 200 images for each partition, given 250 teacher models for CIFAR-10. One might leverage transfer learning (TL), as suggested in \cite{luo2021scalable}, to train teacher models in PATE. Specifically, this involves using a public pre-trained model and fine-tuning it on the private dataset. However, Table~\ref{tab:baseline_concept} shows that this TL-based method leads to inefficient performance in PATE\footnote{The poor accuracy of the TL-based method can be attributed to the over-partitioning of the sensitive data. In such a case, data are insufficient for the training of each teacher model.}.

\item \textbf{(C2)} A current trend in training a high-accuracy classifier in a DP manner is to take advantage of either public labeled data or a public pre-trained model. For example, De et al.~\cite{de2022unlocking} pre-train the model with ImageNet (seen as a public dataset) and then fine-tune the model with CIFAR-10 (seen as a private dataset) through DPSGD. De et al. achieve the predicting accuracy 94.7\% under $\epsilon=1$. While many pieces of evidence show that properly exploiting public datasets and models may significantly improve accuracy, a natural question that arises is whether exploiting public datasets and models more times in the design of DP classifiers benefits accuracy. 

\item \textbf{(C3)} Privately training a model pre-trained on the public dataset is a promising solution for DP-classifiers. However, take ImageNet and CIFAR-10 as examples. They may share a similar distribution and so make the above training strategy doubtful in DP guarantee \cite{tramer2022considerations}.
\end{itemize}

\begin{table}[h!]
\vspace{-3mm}
    \begin{center}
    \resizebox{\columnwidth}{!}{
    \begin{tabular}{|c || c c|} 
     \hline
     \textbf{CIFAR-10} & \textbf{Prom-PATE (ours)} & \textbf{TL-based method}  \\ [0.5ex] 
     \hline
     $\epsilon$ & 1.019 & 1.021 \\ 
     \hline 
     \textbf{Accuracy ± Std(\%)} & \textbf{97.07 $\pm$ 0.50} & 76.93 $\pm$ 0.81 \\
     \hline
    \end{tabular}
    }
    \end{center}
    \caption{Comparison of Prom-PATE and TL-based method.}
    \label{tab:baseline_concept}
\end{table}

 


\begin{figure*}
    \centering
    \includegraphics[width=0.8\textwidth]{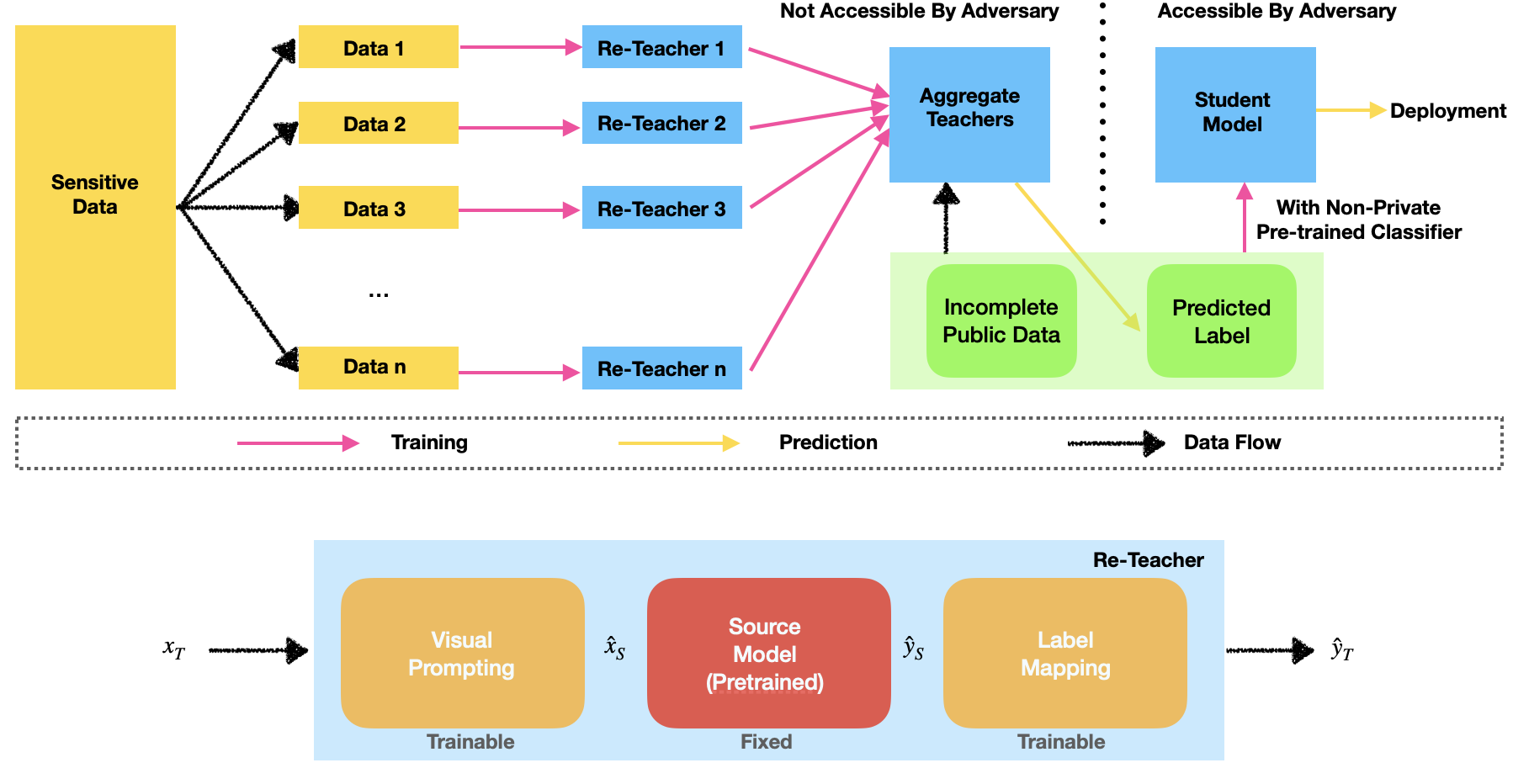}
    \caption{An overview of the proposed Prom-PATE framework. }
    \label{fig:model_architecture}
\end{figure*}

\subsection{Prom-PATE}\label{subsec:Prom-PATE}
Here, we present a new approach, Prom-PATE, which leverages VP and PATE for private learning. The workflow of Prom-PATE is shown in Figure~\ref{fig:model_architecture}. Prom-PATE is a simple yet effective 
approach to training a classifier in a DP manner. Basically, Prom-PATE follows all of the steps in PATE \cite{scalablepate, pate} except that each teacher model in PATE is reprogrammed from a pre-trained source model to a \textit{re-teacher model}. The structure of re-teacher model is also shown in Figure~\ref{fig:model_architecture}. Such simplicity of Prom-PATE also enjoys the direct inheritance of DP guarantee from PATE. 

\paragraph{Prom-PATE Procedures.}
Prom-PATE consists of three steps: (a) training re-teacher models, (b) executing private aggregation, and (c) training a student model. Step (a) considers a public pre-trained model as a \textit{source model} and trains visual prompting and label mapping on sensitive data. In particular, we are aimed to train only the prompting parameter $\omega$ while the pre-trained source model is always fixed. The prompting parameter $\omega$ (including trainable parameters $\omega_1$ and $\omega_2$ in Eq. (\ref{eq:1}) and Eq. (\ref{eq:2}), respectively) and collectively called re-teacher model (see Figure~\ref{fig:model_architecture}). We note that the re-teacher model is trained on the sensitive dataset through SGD, and hence does not fulfill DP. The next step contributes to the DP guarantee of Prom-PATE. Step (b) uses PATE to aggregate the predictions of the re-teacher models; i.e., when a sample is fed into re-teacher models, all of them have votes and use the DP noisy top-1 outcome as the label. In step (c), a student model is trained using semi-supervised learning with a pre-trained classifier. In particular, certain unlabeled public samples with labels from the DP noisy votes are used to train the student model, which serves as the resulting DP classifier. One can easily prove that Prom-PATE satisfies DP; the proof can be found in the Supplementary Materials. 

\paragraph{Training re-teacher Models.}
During the training of each re-teacher model, we keep the source model fixed while conducting SGD to update only the label mappings and visual prompts. The visual prompt $\hat{x}_{S}$ can be expressed as 
\begin{align}\label{eq:1}
    \hat{x}_{S} = M \odot \omega_{1} + (I-M) \odot \text{ZeroPad}(x_{T}),
\end{align}
where $\odot$ stands for Hadamard product, $\omega_{1}$ denotes the trainable noise parameter, and $M$ denotes the binary mask of the same dimension with the source data $x_{S}$ (i.e. $M \in \{0,1\}^{d_{S}}$, where $d_{S}$ denotes the dimension of the source domain image). On the other hand, upon obtaining the pre-trained model output $\hat{y}_{S} := f_{S}(\theta_{S}; \hat{x}_{S})$, we further render it through a label mapping function $f_{\ell}(\omega_{2};\cdot)$ that maps the source labels to target labels and obtain the final prediction $\hat{y}_{T}$ which has the following form
\begin{align}\label{eq:2}
    \hat{y}_{T} = \text{softmax}(f_{\ell}(\omega_{2}; \hat{y}_{S})).
\end{align}

\begin{figure}
    \centering
    \includegraphics[width=0.4\textwidth]{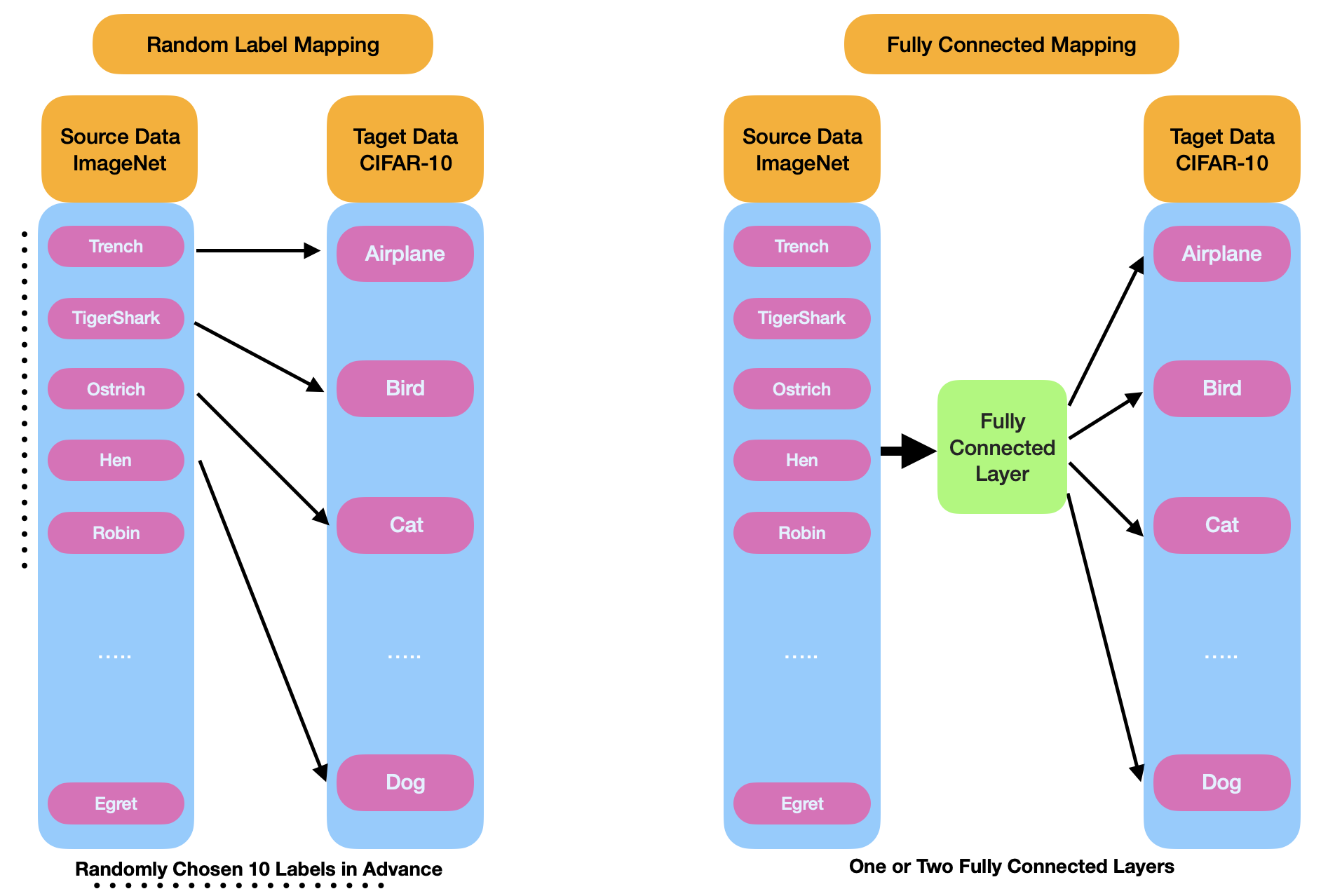}
    \caption{Illustration of different strategies for label mapping. 
    \textit{Left}: we follow the convention setting in VP \cite{bahng2022visual} and apply randomly assigned label mapping that is pre-determined before training. 
    \textit{Right}: we simply apply a trainable fully-connected layer for the model to learn the appropriate mapping as proposed in \cite{arifreprogrammable}
    }
    \label{fig:label_mapping_concept}
\end{figure}

\paragraph{Algorithmic Details of Prom-PATE.}
Figure~\ref{fig:label_mapping_concept} illustrates different label mapping techniques used in Prom-PATE. To have a correspondence in label classes between the target and source domains, in the first approach, we conduct random label mapping~\cite{bahng2022visual,tsai2020transfer}. Particularly, before training, we establish a random mapping between the labels of two domains and train the model according to the predetermined label mapping (e.g., ImageNet label $i$ $\rightarrow$ CIFAR-10 label $j$). In this case, $\omega_{2}$ specifies the source-target label correspondence in VP. For the second approach, we consider using fully connected (FC) layers as part of the label mapping for greater expressiveness, as studied in \cite{arifreprogrammable}. This allows Prom-PATE to learn how to adapt labels from the source domain to the target domain. Overall, the re-teacher models in Prom-PATE only need to train the parameters $\omega := \{\omega_{1}, \omega_{2}\}$ on the private/sensitive dataset.


To enforce DP in Prom-PATE, we adopt the DP aggregation from PATE by considering Confident-GNMax \cite{scalablepate, pate}. Specifically, given an unlabelled public data sample $x$, the aggregation mechanism would collect the response from every re-teacher model, establishing votes for each $j$-th class, $n_{j}(x)$. The aggregation then proceeds to determine whether the noisy votes are consent among re-teachers above a threshold $T$. Namely,
\begin{align}
    \max_{j}\{n_{j}(x)\} + \mathcal{N}(0, \sigma_{1}^{2}) \geq T.
\end{align}
If the inequality is met, then the aggregation would proceed to offer noisy votes of re-teachers model as follows.
\begin{align}
    \arg\max_{j}\{n_{j}(x) + \mathcal{N}(0, \sigma_{2}^{2})\}. 
\end{align}
Otherwise, the aggregation would output nothing.

To limit the privacy budget and further enhance performance in Prom-PATE, we use a subset of the public training data and label it using the private aggregation mechanism while conducting training for the rest of the training data in a semi-supervised fashion. Similar to PATE, this approach allows us to improve the privacy-utility trade-off by reducing the amount of data that needs to be labeled while still achieving high accuracy.

Since re-teacher models can adapt to the private domain under small sample complexity, we adopt the approach presented in \cite{wang2022usb} for our semi-supervised learning of the student model. We explain the details of this approach in Section \ref{subsec:compare_baseline}, where we compare it to other baseline settings. Using this approach, we can achieve a better privacy-utility trade-off and improve overall performance of Prom-PATE.


\subsection{Why Prom-PATE is Beneficial to DP?}\label{subsec:intuition}
This section provides an explanation as to why Prom-PATE, as a combination of VP and PATE, can attain an improved privacy-utility trade-off by overcoming the design challenges \textbf{(C1)}$\sim$\textbf{(C3)}.

\begin{itemize}
\item \textbf{(C1)} As mentioned in Section \ref{subsec:pate}, though PATE is superior to DPSGD from the perspectives of noise scale and information loss, it can apply only to huge datasets because, otherwise, the teacher models fail to have decent accuracy, leading to poor student classifier accuracy. However, VP has proven to successfully transfer knowledge from large source domains to small target domains~\cite{tsai2020transfer}. Thus, considering each partitioned slice of the sensitive dataset as a small target domain enables re-teacher models in Prom-PATE to avoid the problem of data insufficiency when increasing the number of re-teacher models, amplifying the benefits of ensemble learning in the ordinary PATE.

\item \textbf{(C2)} Prom-PATE is featured by utilizing the public data \textit{twice}; once in training re-teacher models and another one in training the student classifier. This can be attributed to our finding that PATE, in essence, can easily be modified to take advantage of pre-trained models (see the design of Prom-PATE in Section~\ref{subsec:Prom-PATE}). Such efficient re-use of the public data can be highly beneficial to the resulting DP classifier accuracy, as shown in Table~\ref{tab:pretrain_classifier_concept}, where Prom-PATE, Prom-PATE w/o pre-trained classifier, and PATE means utilizing public data two, one, and zero times, respectively. Obviously, the accuracy grows with the increased number of times for utilizing public data. 

\item \textbf{(C3)} Due to cross-domain capability of VP/MR~\cite{tsai2020transfer, chen2022model, yang2021voice2series}, even if the distribution of the dataset for the source model (used in training re-teacher models) is highly different from the distribution of the sensitive dataset, re-teacher models can still successfully attain high accuracies, which consequently improve the accuracy of the resulting DP classifier. Experiment evidence can be found in Section~\ref{subsec:cross-domain}.
\end{itemize}

\begin{table}[h!]\vspace{-3mm}
    \begin{center}
    \resizebox{\columnwidth}{!}{
    \begin{tabular}{|c || c c c|} 
     \hline
     \textbf{CIFAR-10} & \textbf{Prom-PATE} & \textbf{Prom-PATE w/o pre-trained classifier} & \textbf{PATE} \\ 
     \hline
     $\epsilon$ & 1.019 & 1.019  & 1.028\\ 
     \hline
     \textbf{Accuracy ± Std(\%)} & \textbf{97.07 $\pm$ 0.50} & 82.20 $\pm$ 1.14 & 32.53 $\pm$ 2.57 \\
     \hline
    \end{tabular}
    }
    \end{center}
    \caption{Effect on the pre-trained classifier.}
    \label{tab:pretrain_classifier_concept}
\end{table}




\section{Experiments}\label{sec:experiment}
In this section, we empirically evaluate the effectiveness of Prom-PATE on different datasets, with ImageNet serving as the public dataset for pre-training models. Additional experiments can be found in the Supplemental Materials. 

\subsection{Datasets and Implementation Details}\label{subsec:dataset}
We mainly use CIFAR-10 to benchmark image classification. However, we also report the results for CIFAR-100 in the Supplementary Material. 

\paragraph{Cross Domain Dataset.} To evaluate how Prom-PATE behaves in private domain adaptation with a large domain gap, we consider Blood-MNIST in our experiments. The Blood-MNIST dataset~\cite{yang2023medmnist} contains images of blood cells sampled from uninfected patients, with an original shape of $3 \times 360 \times 363$. It contains 17,092 images of 8 different blood cells (11,959 of training and 3421 of testing) and has been processed to the size of $3 \times 28 \times 28$~\cite{yang2023medmnist}. We note that the sample distribution of Blood-MNIST is highly different from the sample distribution of ImageNet because the images in Blood-MNIST are taken under microscopic devices and planar in sight. Due to the large domain gap between Blood-MNIST and ImageNet, we consider Blood-MNIST in our experiments to resolve the concern of \textbf{(C3)}. Please see Section~\ref{subsec:cross-domain} for more details.

\paragraph{Implementation Details.} All of the experiment results below are derived by averaging the results from three independent experiments. We use the official pre-trained models provided by PyTorch and set the parameters to default values for all pre-trained models. Regarding the training of each re-teacher model, since the source model is pre-trained on ImageNet, the visual prompt has a dimension of $224 \times 224$. When training the re-teacher model, we optimize the model with Adam whilst using a learning rate of $0.05$ with a decay rate of $70\%$, batch size of $16$, and training epoch of $10$. 
In Section~\ref{subsec:visual_tech}, we also investigate the effect of the binary mask $M$ on visual prompt performance.
For label mapping, we randomly select ten classes from the 1,000 source classes as a one-to-one mapping. We also use FC layers as the label mapping function in Section \ref{subsec:label_tech}. 

For the training of the student model, similar to the setting in PATE~\cite{scalablepate}, in the case of CIFAR-10, the student has access to 9,000 samples that are partially labeled through the noisy aggregation mechanism (step (b) in Prom-PATE) in Section~\ref{subsec:Prom-PATE}. The performance is evaluated on the remaining 1,000 samples in the testing set. 
Meanwhile, in the case of Blood-MNIST~\cite{yang2023medmnist}, the student has access to 2,421 samples that are as well partially labeled with privacy. The performance is evaluated on the remaining 1,000 samples in the testing set.

\paragraph{Privacy Parameter Setting.} We use R\'{e}nyi DP (RDP, see the definition in the Supplementary Materials) privacy accountant\footnote{\url{https://github.com/tensorflow/privacy/tree/master/research/pate_2018}} to calculate the privacy budget $\epsilon$. We adopt the $\delta \approx \frac{1}{n}$ convention and set $\delta = 10^{-5}$. 

\paragraph{Evaluation Metrics.} As the focus in this line of research mainly lies on image classification, we follow the convention and use the top-1 accuracy on CIFAR-10 as the metric. 


\subsection{Ablation Study of Prom-PATE}\label{subsec:compare_baseline}

We conduct an ablation study on Prom-PATE for multiple baselines that can arise from our setting. In Prom-PATE, two key components for significant improvement of accuracy are re-teacher models and the use of a pre-trained classifier in student training. Thus, there are two dimensions for the ablation study: (i) VP-based re-teacher models, transfer learning-based teacher models, and train-from-scratch teacher models and (ii) using pre-trained or train-from-scratch classifiers in semi-supervised learning of the student model. Note that these pre-trained classifiers are all trained on ImageNet. The experiment results are shown in Table~\ref{tab:compare_baseline}, where the setting A corresponds to Prom-PATE while the setting F corresponds to the ordinary PATE.

\begin{table}[htbp]
    \centering
    \small
    \resizebox{\columnwidth}{!}{
    \begin{tabular}{@{}| c | c | c |  c | c|@{}}
    \hline & Teacher & Student Training & $\epsilon$  & \shortstack{Accuracy $\pm$ Std(\%)} \\
    \hline
    \textbf{A} & VP-based re-teacher models & pre-trained & 1.019 & \textbf{97.07 $\pm$ 0.50}\\  
    \hline
    \textbf{B} & VP-based re-teacher models & train-from-scratch & 1.019 & 82.20 $\pm$ 1.14 \\  
    \hline 
    \textbf{C} & transfer learning & pre-trained & 1.021 &  96.10 $\pm$ 0.46\\ 
    \hline
    \textbf{D} & transfer learning & train-from-scratch & 1.021 & 76.93 $\pm$ 0.81 \\
    \hline
    \textbf{E} & train-from-scratch & pre-trained & 1.028 & 49.00 $\pm$ 8.97 \\
    \hline
    \textbf{F} & train-from-scratch & train-from-scratch & 1.028 & 32.53 $\pm$ 2.57 \\
    \hline
    \end{tabular}
    }
    \caption{Ablation study of Prom-PATE.}
    \label{tab:compare_baseline}
\end{table}

From Table \ref{tab:compare_baseline}, we can observe that by comparing A with C and B with D, VP-based re-teacher models in Prom-PATE indeed hold an advantage over transfer learning-based teacher models when adapting the target domain of meager data, exceeding by a maximum of 5\%. Secondly, suppose we compare A with B, C with D, and E with F, we can also see that utilizing a public pre-trained classifier in student training in Prom-PATE allows us to gain another performance improvement, ranging from 15\% to 20 \%. However, we particularly note that simply making use of a pre-trained classifier is not sufficient to have a great increase in accuracy, because the settings A and E, both containing a pre-trained classifier in the student training, have a difference of approximately 40\% in terms of the predicting accuracy. The above results support the importance of re-teacher models in Prom-PATE. Lastly, we note that albeit holding a small difference against Prom-PATE and the transfer learning baseline, we note that under a sufficient domain gap, the re-teacher tends to perform much better at these diverse private domains. We refer the readers to Section \ref{subsec:cross-domain} for more details.

\begin{table}[H]
    \begin{center}
    \footnotesize
    \begin{tabular}{|c |c | c |c |} 
     \hline
     & $\epsilon$ & sanitized $\epsilon$ & Accuracy on CIFAR-10 \\ [0.5ex] 
      \hline 
      Arif et al. \cite{arifreprogrammable} & 1.04 & 1.04 & 87.55\% \\
      \hline
      \multirow{2}{*}{Luo et al. \cite{luo2021scalable}} & 1 & 1 & 76.64\% \\ 
      & 1.5 & 1.5 & 81.57\% \\ 
      \hline
      Tramer et al. \cite{tramer2020differentially} & 2 & 2 & 92.7\% \\ 
      \hline
       \multirow{2}{*}{Yu et al. \cite{yu2022differentially}} & 1 & 1 & 94.3\% \\
      & 2 & 2 & 94.8\% \\  
      \hline 
      \multirow{2}{*}{De et al. \cite{de2022unlocking}} & 1 & 1 & 94.7\% \\ 
      & 2  & 2 & 95.4\% \\
      \hline 
      \multirow{2}{*}{Bu et al. \cite{bu2022scalable}} & 1 & 1 & 96.7\% \\ 
      & 2 & 2 & 97.1\% \\ 
      \hline
      \multirow{3}{*}{Prom-PATE} & 1.019 & 1.209 &  \textbf{99.17\%} \\ 
      & 1.505 & 1.670 & \textbf{99.07\%} \\ 
      & 1.943 & 2.250 & \textbf{99.10\%}  \\
      \hline 
    \end{tabular}
    \end{center}
    \caption{Comparison between Prom-PATE and prior work.}
    \label{tab:compare}
\end{table}

\subsection{Comparison with Existing DP Classifiers}\label{subsec:compare}
We further compare Prom-PATE against the existing work including SOTA DP classifiers. Table~\ref{tab:compare} shows the comparison results, where the accuracies of the other methods are directly excerpted from the original papers except that Yu et al.'s experiment results are from \cite{bu2022scalable}. Since Prom-PATE deploys a data-dependent bound in privacy calculation, we further follow \cite{scalablepate} to sanitize our privacy budget using smooth sensitiy analysis, preventing data leakage. The smoothed budget is marked as \textit{sanitized $\epsilon$} in Table~\ref{tab:compare}.

Table~\ref{tab:compare} shows that Prom-PATE achieves competitive performance over current existing works. In the low budget regime ($\epsilon \approx 1$), Prom-PATE outperforms all the other models and achieves the best accuracy of $99.17\%$. While the SOTA classification accuracy of CIFAR-10 (through ViT-H/14~\cite{vit}) in the non-private setting is 99.5\%\footnote{\url{https://paperswithcode.com/sota/image-classification-on-cifar-10} (last access: 2023/7)}, Prom-PATE achieves a meaningful improvement in accuracy. The reason that Prom-PATE with $\epsilon=1.019$ achieves 99.17\% in Table~\ref{tab:compare} but achieves 97.07\% in Tables~\ref{tab:baseline_concept}$\sim$\ref{tab:compare_baseline} can be attributed to our choice of implementations. In particular, the pre-trained model for re-teachers, the pre-trained model for semi-supervised learning, and the algorithm for semi-supervised learning of Prom-PATE in Table~\ref{tab:compare} are Swin Transformer~\cite{liu2021swin}, EVA~\cite{fang2023eva}, and FreeMatch~\cite{wang2022freematch}, respectively, while those of Prom-PATE in Tables~\ref{tab:baseline_concept}$\sim$\ref{tab:compare_baseline} are Swin Transformer~\cite{liu2021swin}, ViT~\cite{vit}, and FixMatch~\cite{sohn2020fixmatch}. In addition, unlike the other approaches~\cite{mixedDP, luo2021scalable, Klause2022, tramer2020differentially, de2022unlocking, bu2022scalable}, Prom-PATE enjoys great flexibility in replacing source models (in re-teacher models) by the latest classifiers and up-to-date semi-supervised training method, so as to effortlessly improve the accuracy.  

\subsection{Cross-Domain Dataset Evaluation}\label{subsec:cross-domain}
We evaluate Prom-PATE under a cross-domain setting, where the re-teacher models with public pre-trained models are visually prompted toward a small private target domain. As mentioned in Section~\ref{subsec:dataset}, we evaluate Prom-PATE on Blood-MNIST \cite{yang2023medmnist}. The experiment results are shown in Table~\ref{tab:cross-domain}, where Transfer-PATE is considered to use the same backbone source model of Prom-PATE and performs partial fine-tuning when training the teacher models.

\begin{table}[H]\vspace{-3mm}
    \begin{center}
    \resizebox{\columnwidth}{!}{
    \begin{tabular}{|c || c c c|} 
     \hline
     \textbf{Blood-MNIST} & \textbf{Prom-PATE} & \textbf{Transfer-PATE} & \textbf{Arif et al. \cite{arifreprogrammable}} \\ [0.5ex] 
     \hline
     $\epsilon$ & 1.973 & 1.983 & 1.971 \\ 
     \hline
     sanitized $\epsilon$ & 2.521 & 2.508 & 1.971 \\
     \hline
     Queries & 1000 & 1000  & - \\
     \hline
     Answered Queries & 455 & 408 & -\\ 
     \hline
     Answer Accuracy(\%) & 79.3 & 76.7 & -\\ 
     \hline 
     Threshold T & 480 & 490 & -\\  
     \hline
     $\sigma_{1}$ & 150 & 150 & -\\ 
     \hline
     $\sigma_{2}$ & 20 & 20 & - \\
     \hline 
     \textbf{Accuracy(\%)} &  \textbf{69.93} & 61.33 & 63.45\\
     \hline
    \end{tabular}
    }
    \end{center}
    \caption{Effect on cross-domain datasets.}
    \label{tab:cross-domain}
\end{table}

As one can see from Table~\ref{tab:cross-domain}, when adapting to a target domain with sufficient domain gap, Prom-PATE is able to manage the advantage of VP and maximize the accuracy gain given a fixed amount of privacy budget to vote for highly accurate labels that are beneficial for downstream student training, exceeding the Transfer-PATE by roughly 8\%. 
On the other hand, Prom-PATE is also compared against Reprogrammable-FL~\cite{arifreprogrammable}, because the latter improves accuracy in the context of FL. Prom-PATE outperforms Reprogrammable-FL by approximately 2\%. This can be attributed to much noisy perturbation of Reprogrammable-FL as stated in Section~\ref{sec:related}. Most importantly, due to the high discrepancy between ImageNet and Blood-MNIST, the high accuracy from such a train-on-ImageNet and test-on-Blood-MNIST setting also eliminates the suspicion \textbf{(C3)} from \cite{tramer2022considerations}.

\subsection{Numbers of Re-Teacher Models}\label{subsec:teacher_number}
In this section, we investigate the model performance under different numbers of re-teacher models. Table~\ref{tab:teacher_number} reports the results, where Swin Transformer~\cite{liu2021swin} is used as the source model for re-teacher models. As shown in Table \ref{tab:teacher_number}, the best utility is achieved when using 1000 re-teacher models under a privacy budget of $\epsilon \approx 1$. We also note that the accuracy of all settings with 250, 500, and 1000 re-teacher models already exceed the performance of PATE~\cite{scalablepate} under a privacy budget of $\epsilon \approx 1$.

\begin{table}[H]\vspace{-3mm}
    \begin{center}
    \resizebox{\columnwidth}{!}{
    \begin{tabular}{|c || c c c c|} 
     \hline
     \textbf{Number of re-teachers} & \textbf{100} & \textbf{250} & \textbf{500} & \textbf{1000}\\ [0.5ex] 
     \hline
     $\epsilon$ & 1.095 & 1.095 & 1.04 & 1.019 \\ 
     \hline
     Queries & 1000 & 1000 & 1000  & 1000\\
     \hline
     Answered Queries  & 18 & 46 & 90 & 684   \\
     \hline
     Threshold T & 430 & 500 & 650 & 500\\  
     \hline
     $\sigma_{1}$ & 150 & 150 & 150 & 200 \\ 
     \hline
     $\sigma_{2}$ & 50 & 100 & 100 & 50 \\
     \hline 
     \textbf{Accuracy\%) ± Std} & 59.20 $\pm$ 0 & 85.87 $\pm$ 0.55 &96.53 $\pm$ 0.74 & \textbf{97.07 $\pm$ 0.50} \\
     \hline
    \end{tabular}
    }
    \end{center}
    \caption{Effect on different numbers of re-teacher models.}
    \label{tab:teacher_number}
\end{table}

\subsection{Different Pre-Trained Models}\label{subsec:pretrain_type}
We study the effect of different pre-trained source models on Prom-PATE. Table~\ref{tab:pretrained} reports the results. In particular, using Swin Transformer \cite{liu2021swin} as the pre-trained source model results in the best performance of 99\% on CIFAR-10. This is consistent with the theoretical relationship presented in \cite{yang2021voice2series}, which states that the population risk on the target task of the reprogrammed model can be upper bounded by the source risk with an additional term in misalignment error. Therefore, as we can see from Table \ref{tab:pretrained_source}, which includes the accuracy of pre-trained models on the source domain (i.e., source risk), Swin Transformer has the least empirical risk and serves as a natural choice for the source model.
\begin{table*}[hbt!]
    \centering
    \resizebox{2\columnwidth}{!}{
    \begin{tabular}{@{}|c|c|cccccc||c|@{}}
    \hline & $\epsilon$   & \shortstack{Queries} & \shortstack{Answered Queries} & \shortstack{Answered Accuracy(\%)} & \shortstack{Threshold $T$} & \shortstack{$\sigma_{1}$} & \shortstack{$\sigma_{2}$} & \shortstack{Accuracy $\pm$ Std(\%)} \\ 
    \hline
    \textbf{ResNet50} & 1.081 & 1000 & 461 & 91.3 & 650 & 200 & 50 & 95.27 $\pm$ 0.80\\
    \hline 
    \textbf{ResNet152} & 1.009 & 1000 & 604 & 93.9 & 620 & 200 & 50 & 95.40 $\pm$ 0.40  \\ 
    \hline
    \textbf{WideResNet} & 1.068 & 1000 & 555 & 90.8 & 620 & 200 & 50 & 94.37 $\pm$ 0.25\\ 
    \hline
    \textbf{ViT} & 1.007 & 1000 & 660 & 93.6 & 600 & 200 & 50 & 95.53 $\pm$ 0.51 \\ 
    \hline
    \textbf{Swin} & 1.019 & 1000 & 684 & 94.7 & 600 & 200 & 50 & \textbf{97.07 $\pm$ 0.50} \\
    \hline
    \end{tabular}
    }
    \caption{Effect on different pre-trained models.}
    \label{tab:pretrained}
\end{table*}

\begin{table}[H]\vspace{-2mm}
    \begin{center}
    \footnotesize
    \begin{tabular}{|c || c|} 
     \hline
     \textbf{ImageNet} & \textbf{Accuracy} \\ [0.5ex] 
     \hline
     \textbf{ResNet50} & 79.3\\
     \hline
     \textbf{ResNet152} & 78.5\\
     \hline
     \textbf{WideResNet} & 78.1 \\  
     \hline
     \textbf{ViT} &  84.0 \\ 
     \hline
     \textbf{Swin Transformer} & 85.2 \\
     \hline 
    \end{tabular}
    \end{center}
    \caption{Test accuracy of ImageNet source models.}
    \label{tab:pretrained_source}
\end{table}

\subsection{Binary Mask in Visual Prompting}\label{subsec:visual_tech}
We further study how the different visual prompting techniques affect classification accuracy. Specifically, we consider two settings on whether to apply the binary mask $M$ or not. Table~\ref{tab:prompt_technique} reports the results, where Swin transformer~\cite{liu2021swin} as the source model with $1000$ re-teacher models is considered.

\begin{table}[H]
    \begin{center}
    \resizebox{\columnwidth}{!}{
    \begin{tabular}{|c || c c|} 
     \hline
     \textbf{Prompting Technique} & \textbf{Without Mask $M$} & \textbf{With Mask $M$} \\ [0.5ex] 
     \hline
     $\epsilon$ & 1.017 & 1.019\\ 
     \hline
     Queries & 1000 & 1000 \\
     \hline
     Answered Queries & 675 & 684 \\ 
     \hline
     Answer Accuracy(\%) & 94.8 & 94.7 \\
     \hline
     Threshold T & 600 & 600 \\  
     \hline
     $\sigma_{1}$ & 200 & 200 \\ 
     \hline
     $\sigma_{2}$ & 50 & 50 \\
     \hline 
     \textbf{Accuracy ± Std(\%)} & 96.53 $\pm$ 0.32 & \textbf{97.07 $\pm$ 0.50 } \\
     \hline
    \end{tabular}
    }
    \end{center}
    \caption{Effect on visual prompting technique}
    \label{tab:prompt_technique}
\end{table}\vspace{-4mm}

One can observe from Table~\ref{tab:prompt_technique} that using $M$ could enhance performance. The rationale is that by utilizing $M$, we can control the amount of noise placed in the visual prompt, hence controlling the ratio of target data $x_{T}$ and noise parameter $\omega_{1}$. This leads to a better trade-off between accuracy and the meager amount of private data each re-teacher model owns.

\subsection{Label Mapping Techniques}\label{subsec:label_tech}
Next, we proceed to investigate the effect of label mapping on Prom-PATE. Particularly, we consider the settings of using random label mapping (RLM), one fully-connected layer, and two fully-connected layers (see Figure~\ref{fig:label_mapping_concept}).
Table~\ref{tab:mapping_technique} shows the experiment results, where Swin transformer~\cite{liu2021swin} as the source model with $1000$ re-teacher models is considered. In particular, using one FC layer allows Prom-PATE to achieve the best performance. Furthermore, we note that randomly selecting ten classes for mapping would disrupt the behavior of the pre-trained model, as the mapping relations among source and target labels are randomly given but other remaining source classes might contain valuable information for the prediction. Such an explanation can be confirmed by the accuracy (i.e., noisy label accuracy) of RLM, which is only $22.9\%$, demonstrating that even with a high consensus of the re-teacher models, the ensemble prediction is likely to be wrong as well. On the other hand, while using two FC layers allows for more expressiveness, the number of training parameters is increased as well, leading to a slight degradation in accuracy with limited training data for each re-teacher model.

\begin{table}[hbt]\vspace{-1mm}
    \begin{center}
    \resizebox{\columnwidth}{!}{
    \begin{tabular}{|c || c c c|} 
     \hline
     \textbf{Mapping Technique} & \textbf{RLM} & \textbf{1-Layer FC} & \textbf{2-Layer FC} \\ [0.5ex] 
     \hline
     $\epsilon$ & 1.042 & 1.019 & 1.026 \\ 
     \hline
     Queries & 1000 & 1000  & 1000 \\
     \hline
     Answered Queries & 109 & 684 & 336\\ 
     \hline
     Answer Accuracy(\%) & 22.9 & 94.7 & 92.6\\ 
     \hline 
     Threshold T & 650 & 600 & 670\\  
     \hline
     $\sigma_{1}$ & 200 & 200 & 200\\ 
     \hline
     $\sigma_{2}$ & 50 & 50 & 50\\
     \hline 
     \textbf{Accuracy ± Std(\%)} & 33.4 $\pm$ 0.66 & \textbf{97.07 $\pm$ 0.50 } & 96.13 $\pm$ 0.41\\
     \hline
    \end{tabular}
    }
    \end{center}
    \caption{Effect on label mapping techniques.}
    \label{tab:mapping_technique}
\end{table}

\subsection{Rescale Ratio in Visual Prompting}\label{subsec:visual_tradeoff}
Usually, in VP/MR, the image from the target domain needs to be rescaled and surrounded by trainable noises, as shown in Eq. (\ref{eq:1}). The resulting $\hat{x}_S$ can then be fed into the source model. A higher rescale ratio generally leads to better performance. The rationale is that a higher rescale ratio provides more information from the target domain, which enables the re-teacher model to generate better visual prompts that can more effectively guide the source model in learning the relevant features of the target domain. However, a too-high rescale ratio could potentially result in overfitting to the target domain, leading to poor generalization performance. Hence, one strikes a balance between providing sufficient information from the target domain and avoiding overfitting. In our experiments, a rescale ratio of $0.6$ achieves the best performance.

\begin{table}[hbt]\vspace{-1mm}
    \centering
    \small
    \resizebox{\columnwidth}{!}{
    \begin{tabular}{@{}|c|l|ccccc||c|@{}}
    \hline \textbf{Rescale Size} & \hspace{2ex}$\epsilon$   & \shortstack{AQ} & \shortstack{AA(\%)} & \shortstack{$T$} & \shortstack{$\sigma_{1}$} & \shortstack{$\sigma_{2}$} & \shortstack{Accuracy $\pm$ Std(\%)} \\ 
    \hline
    \textbf{$64\times64$} & 1.028 & 408 & 86.3 &650 & 200 & 50 & 93.03 $\pm$ 1.0 \\ 
    \hline
    \textbf{$128\times128$} & 1.016 & 662 & 92.6 &610 & 200 & 50 & 95.83 $\pm$ 0.1 \\ 
    \hline
    \textbf{$160\times160$} & 1.016 & 655 & 93.7 &610 & 200 & 50 & 95.07 $\pm$ 0.3 \\ 
    \hline
    \textbf{$192\times192$} & 1.019 & 684 & 94.7 &600 & 200 & 50 & \textbf{97.07 $\pm$ 0.5} \\
    \hline
    \textbf{$210\times210$} & 1.016 & 655 & 93.7 &610 & 200 & 50 & 95.30 $\pm$ 0.5 \\
    \hline
    \end{tabular}
    }
    \caption{Effect on the rescale ratio of target Data. The number of queries is 1,000. AQ, AA, and T denote answered queries, answered accuracy (\%), and threshold, respectively.}
    \label{tab:rescale}
\end{table}

As observed from Table \ref{tab:rescale}, rescaling $x_T$ to $192 \times 192$ for visual prompting achieved the highest utility. As explained in Section \ref{subsec:visual_tech}, the rescale size provides a ratio between the trainable parameter $\omega_1$ and target data $x_T$. Too many noise parameters and a small target image might degrade performance due to the quality of the target image and insufficient data. Conversely, a larger target image and fewer parameters of $\omega_1$ might cause sub-optimal input transformation from target to source, leading to a poor prompt.

\section{Conclusion}\label{sec:conclusion}
In this paper, we conducted a comprehensive study and discovered a new benefit of VP in DP. In particular, we propose Prom-PATE, a new VP-empowered training method for constructing DP classifiers. Prom-PATE leverages VP to assist in the adaptation of pre-trained models in a more efficient way without losing privacy. Empirical evaluations show that Prom-PATE provides SOTA performance compared to several baselines and existing works. We also find that Prom-PATE achieves an even better accuracy gain when the target task has a sufficient domain gap against the pre-trained model (i.e., the ImageNet to Blood-MNIST setting), demonstrating the generality of Prom-PATE.
Our findings
suggest that VP is a promising approach to facilitating further research in building DP classifiers that improve or even extinguish the privacy-utility trade-off.

\clearpage
{\small
\bibliographystyle{ieee_fullname}
\bibliography{egbib}
}

\clearpage
\section{Supplementary Material}
The supplementary material has two parts; the privacy analysis of Prom-PATE in Section~\ref{sec: Privacy Analysis of Prom-PATE} and the additional experiment results in Section~\ref{sec: Additional Experiments}.

\subsection{Privacy Analysis of Prom-PATE}\label{sec: Privacy Analysis of Prom-PATE}
Here, we first present the definitions for $(\epsilon,\delta)$-differential privacy ($(\epsilon,\delta)$-DP) and its variant, R\`{e}nyi Differential Privacy (RDP). After that, we prove that Prom-PATE satisfies $(\epsilon,\delta)$-DP.

\begin{definition}[Differential Privacy, DP] A randomized algorithm $\mathcal{M}$ is $(\varepsilon, \delta)$-DP if for all $\mathcal{S}\subseteq \text{Range}(\mathcal{M})$ and for any neighboring datasets $\mathcal{D}$ and $\mathcal{D}'$,
\begin{align}\label{eq: DP}
\text{Pr}[\mathcal{M}(\mathcal{D})\in \mathcal{S}]\leq e^{\varepsilon} \text{Pr}[\mathcal{M}(\mathcal{D}')\in \mathcal{S}]+\delta.
\end{align}\label{def: DP}
\end{definition}

In Definition~\ref{def: DP}, $\mathcal{D}$ and $\mathcal{D}'$ are neighboring if $\mathcal{D}$ can be obtained by adding or removing one sample from $\mathcal{D}'$

\begin{definition} [R\`{e}nyi Differential Privacy, RDP]
\label{def: RDP}
A randomized algorithm $\mathcal{M}$ is $(\alpha, \epsilon (\alpha))$-RDP with $\alpha>1$ if for any neighboring datasets $\mathcal{D}$ and $\mathcal{D}'$,
\begin{scriptsize}
\begin{align}
\label{eq: RDP}
D_\alpha(\mathcal{M}(\mathcal{D})||\mathcal{M}(\mathcal{D}'))=\frac{1}{\alpha-1}\log \mathbb{E}_{x \thicksim\mathcal{M}(\mathcal{D}')}\left[ \left( \frac{\text{Pr}[\mathcal(\mathcal{D})=x]}{\text{Pr}[\mathcal(\mathcal{D}')=x]} \right)^{\alpha-1}\right]\leq \epsilon (\alpha),
\end{align}
\end{scriptsize}
where $D_{\alpha}$ is the R\'{e}nyi divergence of order $\alpha$.
\end{definition}

\begin{theorem}[Sequential Composition on RDP~\cite{RDP}]
If the mechanism $\mathcal{M}_1$ satisfies $(\alpha, \epsilon_{1})$-RDP and the mechanism $\mathcal{M}_2$ satisfies $(\alpha, \epsilon_{2})$-RDP, then $\mathcal{M}_2\circ \mathcal{M}_1$ satisfies $(\alpha, \epsilon_{1} + \epsilon_{2})$-RDP.
\label{thm: gaussian composition in RDP}
\end{theorem}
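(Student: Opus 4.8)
The plan is to bound the R\'enyi divergence of the \emph{joint} output of the composed mechanism $\mathcal{M}_2\circ\mathcal{M}_1$ directly on a fixed neighboring pair $\mathcal{D},\mathcal{D}'$, and to exploit the fact that the exponentiated divergence $\exp\big((\alpha-1)D_\alpha\big)$ factorizes under the chain rule of probability. I model the composition sequentially: first draw $y_1\sim\mathcal{M}_1(\mathcal{D})$, then $y_2\sim\mathcal{M}_2(y_1;\mathcal{D})$, so the emitted pair $(y_1,y_2)$ has density $P_1(y_1)\,P_{2\mid1}(y_2\mid y_1)$ under $\mathcal{D}$ and $Q_1(y_1)\,Q_{2\mid1}(y_2\mid y_1)$ under $\mathcal{D}'$. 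The reformulation I rely on is that, for $\alpha>1$, the guarantee $D_\alpha\le\epsilon$ is equivalent to the likelihood-ratio moment $\mathbb{E}_{x\sim\mathcal{M}(\mathcal{D}')}\big[(\mathrm{d}\mathcal{M}(\mathcal{D})/\mathrm{d}\mathcal{M}(\mathcal{D}'))^{\alpha}\big]\le e^{(\alpha-1)\epsilon}$; this turns the logarithmic RDP statement into a multiplicative one that composes cleanly.

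First I would substitute the two product densities into this moment and split the resulting integral over $(y_1,y_2)$ into an outer integral over $y_1$ wrapped around an inner integral over $y_2$. The inner integral is precisely $\exp\big((\alpha-1)D_\alpha(\mathcal{M}_2(y_1;\mathcal{D})\,\|\,\mathcal{M}_2(y_1;\mathcal{D}'))\big)$, the per-$y_1$ R\'enyi moment of the second mechanism. I then invoke the $(\alpha,\epsilon_2)$-RDP hypothesis of $\mathcal{M}_2$ to bound this inner quantity by $e^{(\alpha-1)\epsilon_2}$ \emph{uniformly in} $y_1$, factor that constant out of the $y_1$-integral, and recognize what remains as $\exp\big((\alpha-1)D_\alpha(\mathcal{M}_1(\mathcal{D})\,\|\,\mathcal{M}_1(\mathcal{D}'))\big)$, which the $(\alpha,\epsilon_1)$-RDP hypothesis bounds by $e^{(\alpha-1)\epsilon_1}$. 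The product of the two bounds is $e^{(\alpha-1)(\epsilon_1+\epsilon_2)}$; taking logarithms and dividing by the positive quantity $\alpha-1$ delivers $D_\alpha(\mathcal{M}_2\circ\mathcal{M}_1(\mathcal{D})\,\|\,\mathcal{M}_2\circ\mathcal{M}_1(\mathcal{D}'))\le\epsilon_1+\epsilon_2$.

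The step I expect to be the main obstacle is the uniform-in-$y_1$ bound on the inner integral in the \emph{adaptive} regime, where $\mathcal{M}_2(y_1;\cdot)$ is permitted to depend on the realized first-stage output $y_1$. The RDP guarantee of $\mathcal{M}_2$ must be interpreted as holding for every fixed $y_1$ on the same neighboring pair $(\mathcal{D},\mathcal{D}')$; only under this reading can the factor $e^{(\alpha-1)\epsilon_2}$ be pulled out before the outer moment of $\mathcal{M}_1$ is identified, and this is exactly where the argument would break if the second mechanism's privacy degraded for some conditioning values. A smaller bookkeeping point is the sign of $\alpha-1$: because $\alpha>1$ the map $t\mapsto\frac{1}{\alpha-1}\log t$ is increasing, so the chained multiplicative bound passes to the additive bound on $D_\alpha$ with no inequality reversed. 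With these two points settled the conclusion is immediate, and the general $k$-fold composition needed for the full privacy accounting of Prom-PATE follows by induction on this two-mechanism step.
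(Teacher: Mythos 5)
The paper does not actually prove this statement---it is imported verbatim by citation to the RDP reference---and your argument is precisely the canonical proof from that source (Mironov's adaptive composition): factorize $\exp\bigl((\alpha-1)D_\alpha\bigr)$ of the joint output via the chain rule, bound the inner conditional moment of $\mathcal{M}_2$ uniformly in $y_1$ by $e^{(\alpha-1)\epsilon_2}$, then bound the remaining marginal moment of $\mathcal{M}_1$ by $e^{(\alpha-1)\epsilon_1}$. Your proof is correct, including the two points you rightly flag: the RDP hypothesis on $\mathcal{M}_2$ must hold for every fixed conditioning output $y_1$ (which is exactly what the adaptive composition theorem assumes, and what the paper's use in its privacy analysis of Prom-PATE requires), and $\alpha>1$ ensures the final passage from the multiplicative bound to the additive bound on $D_\alpha$ preserves the inequality.
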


\begin{theorem}[Translation of RDP~\cite{mironov2017renyi}]
If a mechanism $\mathcal{M}$ satisfies $(\alpha , \epsilon)$-RDP, then $\mathcal{M}$ satisfies $(\epsilon + \frac{\log(1/\delta)}{\alpha -1 }, \delta)$-DP for any $\delta\in (0, 1)$.
\label{thm: translation}
\end{theorem}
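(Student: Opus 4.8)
The plan is to follow the standard tail-bound argument that converts a bound on the R\'{e}nyi divergence into the additive $(\epsilon, \delta)$ guarantee. Fix neighboring datasets $\mathcal{D}$ and $\mathcal{D}'$ and write $P := \mathcal{M}(\mathcal{D})$ and $Q := \mathcal{M}(\mathcal{D}')$ for the induced output distributions, and let $L(x) := \log \frac{P(x)}{Q(x)}$ denote the privacy-loss random variable. The hypothesis that $\mathcal{M}$ is $(\alpha, \epsilon)$-RDP is equivalent to the exponential-moment bound $\mathbb{E}_{x\sim P}\bigl[e^{(\alpha-1)L(x)}\bigr] \leq e^{(\alpha-1)\epsilon}$, since the left-hand side is exactly $\mathbb{E}_{x\sim P}\bigl[(P(x)/Q(x))^{\alpha-1}\bigr] = e^{(\alpha-1)D_\alpha(P\|Q)}$. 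Set the target privacy parameter $\epsilon' := \epsilon + \frac{\log(1/\delta)}{\alpha-1}$; the goal is then to show $\Pr_{x\sim P}[x\in \mathcal{S}] \leq e^{\epsilon'}\Pr_{x\sim Q}[x\in \mathcal{S}] + \delta$ for every measurable $\mathcal{S} \subseteq \text{Range}(\mathcal{M})$, which is precisely the $(\epsilon',\delta)$-DP condition of Definition~\ref{def: DP}.

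The key step is to split the mass of $P$ on $\mathcal{S}$ according to whether the privacy loss is small or large. I would write the decomposition $P(\mathcal{S}) = P(\mathcal{S} \cap \{L \leq \epsilon'\}) + P(\mathcal{S} \cap \{L > \epsilon'\})$. On the ``good'' region $\{L \leq \epsilon'\}$ the pointwise inequality $P(x) \leq e^{\epsilon'} Q(x)$ holds by the definition of $L$, so the first term is at most $e^{\epsilon'} Q(\mathcal{S})$. It then remains to absorb the ``bad'' region into the additive slack $\delta$.

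For the tail term I would bound $P(\mathcal{S} \cap \{L > \epsilon'\}) \leq \Pr_{x\sim P}[L(x) > \epsilon']$ and apply Markov's inequality to the nonnegative random variable $e^{(\alpha-1)L(x)}$ (here $\alpha > 1$ guarantees the exponent is positive, so the event is preserved), yielding $\Pr_{x\sim P}[L(x) > \epsilon'] \leq e^{-(\alpha-1)\epsilon'}\,\mathbb{E}_{x\sim P}\bigl[e^{(\alpha-1)L(x)}\bigr]$. Combining this with the moment bound supplied by the RDP hypothesis gives a tail probability of at most $e^{(\alpha-1)(\epsilon - \epsilon')}$, and the particular choice $\epsilon' = \epsilon + \frac{\log(1/\delta)}{\alpha-1}$ makes this quantity equal to $e^{-\log(1/\delta)} = \delta$. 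Adding the two pieces yields $P(\mathcal{S}) \leq e^{\epsilon'} Q(\mathcal{S}) + \delta$, and since the argument holds for arbitrary neighboring $\mathcal{D}, \mathcal{D}'$ and arbitrary $\mathcal{S}$, the proof is complete. I do not anticipate a genuine obstacle: the only point requiring care is the clean translation between the R\'{e}nyi-divergence definition and the exponential-moment form of the privacy loss, after which the split-and-Markov argument is entirely routine.
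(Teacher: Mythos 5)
Your proof is correct, but there is nothing in the paper to compare it against: Theorem~\ref{thm: translation} is imported as a black box from Mironov's RDP paper~\cite{mironov2017renyi} and is stated without proof, serving only as an ingredient in the privacy analysis of Prom-PATE (Theorem~\ref{thm: privacy proof of Prom-PATE}). Measured against the cited source, your route is genuinely different from Mironov's original argument. You split $P(\mathcal{S})$ along the event $\{L \leq \epsilon'\}$ versus $\{L > \epsilon'\}$, bound the good region pointwise by $e^{\epsilon'}Q$, and control the bad region by Markov's inequality applied to $e^{(\alpha-1)L}$; every step checks out, including the identity $\mathbb{E}_{x\sim P}\bigl[e^{(\alpha-1)L(x)}\bigr] = e^{(\alpha-1)D_\alpha(P\|Q)}$. (Incidentally, this silently corrects a typo in the paper's Definition~\ref{def: RDP}, which pairs the expectation under $\mathcal{M}(\mathcal{D}')$ with the exponent $\alpha-1$; the correct pairing is either expectation under $P$ with exponent $\alpha-1$, as you use, or under $Q$ with exponent $\alpha$.) Mironov instead applies H\"older's inequality to obtain the probability-preservation bound $P(\mathcal{S}) \leq \bigl(e^{\epsilon} Q(\mathcal{S})\bigr)^{(\alpha-1)/\alpha}$ and then runs a case analysis: if $Q(\mathcal{S})$ is small this bound is already below $\delta$, and otherwise it is below $e^{\epsilon'}Q(\mathcal{S})$ with $\epsilon' = \epsilon + \frac{\log(1/\delta)}{\alpha-1}$. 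Both arguments yield exactly the conversion stated here; yours is the more elementary tail-bound proof that has become standard in expositions of R\'enyi and concentrated DP, while the H\"older route additionally produces the multiplicative bound $P(\mathcal{S}) \leq (e^{\epsilon}Q(\mathcal{S}))^{(\alpha-1)/\alpha}$, which is stronger than the additive guarantee when $Q(\mathcal{S})$ is very small and is occasionally useful in its own right.
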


With the above definition and results, in Theorem~\ref{thm: privacy proof of Prom-PATE} we prove that Prom-PATE satisfies $(\epsilon, \delta)$-DP.

\begin{theorem}
Prom-PATE satisfies $(\epsilon, \delta)$-DP.
\label{thm: privacy proof of Prom-PATE}\end{theorem}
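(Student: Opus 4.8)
The plan is to reduce the privacy of the entire Prom-PATE pipeline to the privacy of its aggregation step (b), and then invoke post-processing. The key observation is that the sensitive dataset $D_{T}$ enters the pipeline in only two places: training the re-teacher models in step (a), and the Confident-GNMax aggregation in step (b). The re-teacher models themselves are never published---only their noisy aggregated votes are---so from the standpoint of the released output (the student classifier) the private data is accessed solely through step (b). First I would make precise that, exactly as in PATE, the private data is partitioned into disjoint slices with one re-teacher trained per slice, so that adding or removing a single sample (the neighboring relation of Definition~\ref{def: DP}) perturbs the training set of at most one re-teacher. Consequently, on any query $x$ the vote histogram $(n_1(x),\dots,n_C(x))$ changes by at most a single vote between neighboring datasets, giving an $\ell_2$ sensitivity of at most $\sqrt{2}$.

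Next I would analyze a single query of Confident-GNMax as a composition of two Gaussian mechanisms: the noisy threshold test $\max_j n_j(x)+\mathcal{N}(0,\sigma_1^2)\ge T$ and, conditioned on passing, the noisy $\arg\max_j\{n_j(x)+\mathcal{N}(0,\sigma_2^2)\}$. Each is a Gaussian mechanism applied to a statistic of bounded sensitivity, so each satisfies $(\alpha,\epsilon(\alpha))$-RDP for every $\alpha>1$ with the standard closed-form Gaussian RDP bound scaling as $\alpha/(2\sigma^2)$ times the squared sensitivity. Using Theorem~\ref{thm: gaussian composition in RDP} (sequential composition of RDP), the two sub-mechanisms combine into a single RDP guarantee for one answered query, and a further application of the same theorem over all answered queries accumulates the per-query bounds into one $(\alpha,\epsilon(\alpha))$-RDP statement for the whole aggregation mechanism.

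I would then convert this to $(\epsilon,\delta)$-DP by Theorem~\ref{thm: translation}, optimizing over $\alpha>1$ as done by the RDP accountant used in the experiments. Finally, since the student model in step (c) is a randomized function of the public unlabeled data and the DP-protected noisy labels alone---never of $D_{T}$ directly---the post-processing invariance of differential privacy implies that the student, and hence Prom-PATE's output, inherits the same $(\epsilon,\delta)$-DP guarantee.

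The main obstacle is the sensitivity bookkeeping in the first step: one must argue carefully that replacing the ordinary teachers with VP-based re-teachers does \emph{not} change the privacy analysis, i.e., that step (a) remains partition-based training of an unreleased model, so the only leakage channel is still the noisy votes. A secondary subtlety is that Confident-GNMax admits a tighter data-dependent RDP bound (exploiting high teacher consensus) that the practical accountant uses; however, the qualitative $(\epsilon,\delta)$-DP conclusion follows already from the data-independent worst-case composition above, so the theorem itself requires only the coarse argument, with the data-dependent accounting relegated to tightening the reported $\epsilon$.
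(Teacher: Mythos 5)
Your proposal is correct and follows essentially the same route as the paper's proof: both reduce the privacy of Prom-PATE to the Confident-GNMax aggregation step (since the re-teacher models are never released), bound it by a Gaussian-mechanism RDP guarantee, accumulate over the labeled queries via Theorem~\ref{thm: gaussian composition in RDP}, and convert to $(\epsilon,\delta)$-DP via Theorem~\ref{thm: translation}. The only difference is granularity: you explicitly re-derive the vote-histogram sensitivity, the two-stage (noisy threshold plus noisy argmax) decomposition, and the post-processing argument for the student model, all of which the paper delegates to its citation of Proposition~8 in \cite{scalablepate} and to the PATE framework itself.
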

\begin{proof}
Basically, Prom-PATE follows the framework of PATE. One can find that the re-teacher models will not be released according to the design of PATE. Therefore, the only step that Prom-PATE ``touches'' the sensitive dataset is when the student model queries the re-teacher ensemble for labeling the unlabelled samples. Hence, by leveraging proposition $8$ in \cite{scalablepate}, we know that the Confident-GNMax fulfills $(\alpha, \frac{\alpha}{2\sigma^{2}})$-RDP guarantee. Here, consider the case where Prom-PATE has $T$ unlabeled samples that need to be labeled by the Confident-GNMax result before the training of the student model. According to Theorem~\ref{thm: gaussian composition in RDP}, Prom-PATE satisfies $(\alpha, T\cdot \frac{\alpha}{2\sigma^{2}})$-RDP. Hence, equivalently, Prom-PATE satisfies $(\epsilon, \delta)$-DP for any $\delta\in (0, 1)$, where $\epsilon=T\cdot \frac{\alpha}{2\sigma^{2}}+ \frac{\log(1/\delta)}{\alpha -1 }$ according to Theorem~\ref{thm: translation}.
\end{proof}

We convert the current data-independent proof to the data-dependent proof. However, the data-dependent proof for Prom-PATE is identical to Proposition 7 and Theorem 6 in \cite{scalablepate}  due to the design of Prom-PATE. Therefore, we skip the proof here. 

\subsection{Additional Experiments}\label{sec: Additional Experiments}
We start by presenting the characteristics and settings of the datasets used in our experiments in Table~\ref{tab:dataset-intro}. Then, we present additional experiment results for the cross-domain tasks and high-resolution images. 

\begin{table*}[hbt!]
    \centering
    \resizebox{2\columnwidth}{!}{
    \begin{tabular}{@{}|c|cccc||c|@{}}
    \hline   & \shortstack{Channel No.} & \shortstack{Class No.} & \shortstack{Re-Teacher Models Training Data Size} & \shortstack{Student Model Training Data Size} &  \shortstack{Testing Data Size} \\ 
    \hline
    \textbf{SVHN \cite{netzer2011reading}} & 3 & 10 & 73,257 & 25,032 & 1,000 \\
    \hline 
    \textbf{EuroSAT \cite{helber2019eurosat}}  & 3 & 10 & 10,000 & 16,000 & 1,000  \\ 
    \hline
    \textbf{PathMNIST \cite{yang2023medmnist}}  & 3 & 9 & 89,996 & 6,180 & 1,000 \\ 
    \hline
    \textbf{TissueMNIST \cite{yang2023medmnist}} & 1 & 8 & 165,466 & 46,280 & 1,000  \\ 
    \hline
    \textbf{DermaMNIST \cite{yang2023medmnist}}& 3 & 7 & 7,007 & 1,505 & 500 \\
    \hline
    \textbf{CelebA-Gender \cite{liu2018large}} & 3 & 2 & 162,770 & 18,962 & 1,000 \\ 
    \hline 
    \textbf{CelebA-Hair \cite{liu2018large}} & 3 & 3 & 108,358 & 10,668 & 1,000 \\ 
    \hline 
    \textbf{FFHQ-Gender \cite{karras2019style}} & 3 & 2 & 50,000 & 18,471 & 1,000 \\ 
    \hline
    \textbf{CIFAR-100 \cite{krizhevsky2009learning}} & 3 & 100 & 50,000 & 9,000 & 1,000 \\ 
    \hline
    \end{tabular}
    }
    \caption{Characteristics and experiment settings of different datasets.}
    \label{tab:dataset-intro}
\end{table*}

\paragraph{Additional Experiment Results for Cross-Domain Tasks.} As the source model is trained on ImageNet, we evaluate in Section~\ref{subsec:cross-domain} the performance of Prom-PATE on Blood-MNIST to demonstrate the superiority of Prom-PATE under a cross-domain setting. Here, we provide more experiment results (Tables~\ref{tab:SVHN}$\sim$~\ref{tab:DermaMNIST}) on different datasets. In Tables~\ref{tab:cross-domain}, \ref{tab:teacher_number}, \ref{tab:prompt_technique}, \ref{tab:mapping_technique}, \ref{tab:SVHN}$\sim$~\ref{tab:DermaMNIST}, the row \textit{Rescale Ratio} means that the image from the target task is rescaled to a specific size. The row \textit{Queries} denotes the number of unlabeled samples that asks for the labels from the noisy aggregation result. Because Prom-PATE follows the design from \cite{scalablepate}, Confident-
GNMax does not always return a label. Thus, the row \textit{Answered Queries} shows the actual number of labels returned by the noisy aggregation result. The row \textit{Answer Accuracy} corresponds to the accuracy of the noisy label. The rows Threshold $T$, $\sigma_1$, and $\sigma_2$ are the parameters in Confident-
GNMax~\cite{scalablepate} that determines when a noisy label will be returned. 

In our experiments, we considered image datasets SVHN~\cite{karras2019style}, EuroSAT~\cite{helber2019eurosat}, PathMNIST~\cite{yang2023medmnist}, TissueMNIST~\cite{yang2023medmnist}, and DermaMNIST~\cite{yang2023medmnist} as our cross-domain tasks. SVHN contains the street-view house numbers. EuroSAT contains the Sentinel-2 satellite images for land use and land cover classification. PathMNIST, TissueMNIST, and DermaMNIST are collections of standardized biomedical images. In Table~\ref{tab:SVHN}$\sim$Table~\ref{tab:DermaMNIST}, we compare Prom-PATE with Transfer-PATE (see Section~\ref{subsec:cross-domain} for the setting of Transfer-PATE) to show the superiority of Prom-PATE. In particular, depending on the different characteristics of datasets, Prom-PATE reaches different accuracies. However, one can see that Transfer-PATE goes worse than Pro-PATE because the cross-domain task requires more fine-tuning and thus more samples in Transfer-PATE.  

\begin{table}[H]
    \begin{center}
    \resizebox{\columnwidth}{!}{
    \begin{tabular}{|c || c c|} 
     \hline
     \textbf{SVHN \cite{karras2019style} } & \textbf{Prom-PATE} & \textbf{Transfer-PATE} \\ [0.5ex] 
     \hline
     $\epsilon$ & 3.022 & 3.055\\ 
     \hline
     Rescale Ratio & 192$\times$192 & - \\ 
     \hline 
     Number of Re-Teachers & 250 & 250 \\ 
     \hline
     Source Model & WideResNet & WideResNet \\ 
     \hline 
     Queries & 2000 & 2000 \\
     \hline
     Answered Queries & 105 & 88 \\ 
     \hline
     Answer Accuracy($\%$) & 86.67 & 79.55 \\
     \hline
     Threshold T & 244 & 230 \\  
     \hline
     $\sigma_{1}$ & 50 & 50 \\ 
     \hline
     $\sigma_{2}$ & 10 & 10 \\
     \hline 
     \textbf{Accuracy ± Std($\%$)} & \textbf{49.13 $\pm$ 3.13} & 42.57 $\pm$ 0.23 \\
     \hline
    \end{tabular}}
    \end{center}
    \caption{SVHN Performance}
    \label{tab:SVHN}
\end{table}

\begin{table}[H]
    \begin{center}
    \resizebox{\columnwidth}{!}{
    \begin{tabular}{|c || c c|} 
     \hline
     \textbf{EuroSAT \cite{helber2019eurosat} } & \textbf{Prom-PATE} & \textbf{Transfer-PATE} \\ [0.5ex] 
     \hline
     $\epsilon$ & 3.018 & 3.029 \\ 
     \hline
     Rescale Ratio & 160$\times$160 & - \\ 
     \hline 
     Number of Re-Teachers & 250 & 250 \\ 
     \hline
     Source Model & WideResNet & WideResNet \\ 
     \hline 
     Queries & 1,000 & 1,000 \\
     \hline
     Answered Queries & 150 & 140 \\ 
     \hline
     Answer Accuracy(\%) & 90.7 & 90.7 \\
     \hline
     Threshold T & 250 & 250 \\  
     \hline
     $\sigma_{1}$ & 100 & 100 \\ 
     \hline
     $\sigma_{2}$ & 10 & 10 \\
     \hline 
     \textbf{Accuracy ± Std($\%$)} & \textbf{94.37 $\pm$ 0.15} & 93.87 $\pm$ 0.80 \\
     \hline
    \end{tabular}}
    \end{center}
    \caption{EuroSAT Performance.}
    \label{tab:EuroSAT}
\end{table}

\begin{table}[H]
    \begin{center}
    \resizebox{\columnwidth}{!}{
    \begin{tabular}{|c || c c|} 
     \hline
     \textbf{PathMNIST \cite{yang2023medmnist} } & \textbf{Prom-PATE} & \textbf{Transfer-PATE} \\ [0.5ex] 
     \hline
     $\epsilon$ & 1.024 & 1.021 \\ 
     \hline
     Rescale Ratio & 128$\times$128 & - \\ 
     \hline 
     Number of Re-Teachers & 1,000 & 1,000 \\ 
     \hline
     Source Model & WideResNet & WideResNet \\ 
     \hline 
     Queries & 1,000 & 1,000 \\
     \hline
     Answered Queries & 91 & 83 \\ 
     \hline
     Answer Accuracy(\%) & 100 & 98.80 \\
     \hline
     Threshold T & 1030 & 1050 \\  
     \hline
     $\sigma_{1}$ & 100 & 100 \\ 
     \hline
     $\sigma_{2}$ & 50 & 50 \\
     \hline 
     \textbf{Accuracy ± Std($\%$)} & \textbf{68.50 $\pm$ 0.40} & 64.00 $\pm$ 2.07 \\
     \hline
    \end{tabular}
    }
    \end{center}
    \caption{PathMNIST Performance.}
    \label{tab:PathMNIST}
\end{table}

\begin{table}[H]
    \begin{center}
    \resizebox{\columnwidth}{!}{
    \begin{tabular}{|c || c c|} 
     \hline
     \textbf{TissueMNIST \cite{yang2023medmnist} } & \textbf{Prom-PATE} & \textbf{Transfer-PATE} \\ [0.5ex] 
     \hline
     $\epsilon$ & 2.008 & 2.017 \\ 
     \hline
     Rescale Ratio & 160$\times$160 & - \\ 
     \hline 
     Number of Re-Teachers & 1,000 & 1,000 \\ 
     \hline
     Source Model & WideResNet & WideResNet \\ 
     \hline 
      Queries & 2,000 & 2,000 \\
     \hline
     Answered Queries & 842 & 665 \\ 
     \hline
     Answer Accuracy(\%) & 71.85 & 72.86 \\
     \hline
     Threshold T & 650 & 630 \\  
     \hline
     $\sigma_{1}$ & 100 & 100 \\ 
     \hline
     $\sigma_{2}$ & 50 & 50 \\
     \hline 
     \textbf{Accuracy ± Std(\%)} & \textbf{49.87 $\pm$ 0.57} & 49.30 $\pm$ 0.56 \\
     \hline
    \end{tabular}}
    \end{center}
    \caption{TissueMNIST Performance.}
    \label{tab:TissueMNIST}
\end{table}

\begin{table}[H]
    \begin{center}
    \resizebox{\columnwidth}{!}{
    \begin{tabular}{|c || c c|} 
     \hline
     \textbf{DermaMNIST \cite{yang2023medmnist} } & \textbf{Prom-PATE} & \textbf{Transfer-PATE} \\ [0.5ex] 
     \hline
     $\epsilon$ & 1.861 & 1.852 \\ 
     \hline
     Rescale Ratio & 192$\times$192 & - \\ 
     \hline 
     Number of Re-Teachers & 500 & 500 \\ 
     \hline
     Source Model & WideResNet & WideResNet \\ 
     \hline 
     Queries & 1,000 & 1,000 \\
     \hline
     Answered Queries & 806 & 749 \\ 
     \hline
     Answer Accuracy(\%) & 61.17 & 61.28 \\
     \hline
     Threshold T & 300 & 300 \\  
     \hline
     $\sigma_{1}$ & 100 & 100 \\ 
     \hline
     $\sigma_{2}$ & 200 & 200 \\
     \hline 
     \textbf{Accuracy ± Std($\%$)} & \textbf{60.34 $\pm$ 0.31} & 59.60 $\pm$ 0.20 \\
     \hline
    \end{tabular}}
    \end{center}
    \caption{DermaMNIST Performance.}
    \label{tab:DermaMNIST}
\end{table}

\begin{table*}[hbt]
    \begin{center}
    \resizebox{2\columnwidth}{!}{
    \small
    \begin{tabular}{|c || c c c c c c|} 
     \hline
     \textbf{CelebA-Gender \cite{liu2018large} } & \textbf{Prom-PATE}& \textbf{Prom-PATE} & \textbf{Prom-PATE} & \textbf{Transfer-PATE} & \textbf{Transfer-PATE}  & \textbf{Transfer-PATE}\\ 
     \hline
     $\epsilon$ & 1.555 & 1.55 & 1.528  & 1.552 & 1.547 & 1.536\\ 
     \hline
     Rescale Ratio & 192$\times$192 & 192$\times$192 & 192$\times$192 & - & - & -\\ 
     \hline 
     Number of Re-Teachers & 1,000 & 1,000 & 2,000 & 1,000 & 1,000 & 2,000 \\ 
     \hline
     Source Model & Swin & ViT & Swin & Swin & ViT & Swin \\ 
     \hline 
     Queries & 1,000 & 1,000 & 1,000 & 1,000 & 1,000 & 1,000\\
     \hline
     Answered Queries & 707 & 669  & 794 & 696 & 673 & 789 \\ 
     \hline
     Answer Accuracy(\%) & 97.17 & 97.16 & 97.86 & 97.70 & 97.33 & 97.85 \\
     \hline
     Threshold T & 900 & 900 & 1,800 & 800 & 900 & 1,800\\  
     \hline
     $\sigma_{1}$ & 100 & 100 & 1,000 & 100 & 100 & 1,000\\ 
     \hline
     $\sigma_{2}$ & 200 & 200 & 500 & 200 & 200 & 500\\
     \hline 
     \textbf{FixMatch Accuracy ± Std($\%$)} & 93.17 $\pm$ 0.06 & 92.23 $\pm$ 0.25 & 92.83 $\pm$ 0.20 & 92.87 $\pm$ 0.15 & 91.83 $\pm$ 0.23 & 92.63 $\pm$ 0.35 \\
     \hline
     \textbf{FreeMatch Accuracy ± Std($\%$)} & 93.20 $\pm$ 0.10 & 91.77 $\pm$ 0.40 & 92.33 $\pm$ 0.42 & 92.83 $\pm$ 0.25 & 91.17 $\pm$ 0.21 & 92.33 $\pm$ 0.23\\
     \hline
    \end{tabular}
    }
    \end{center}
    \caption{CelebA-Gender Performance.}
    \label{tab:CelebA-Gender}
\end{table*}

\begin{table*}[hbt]
    \begin{center}
    \resizebox{2\columnwidth}{!}{
    \small
    \begin{tabular}{|c || c c c c c c|} 
     \hline
     \textbf{CelebA-Hair \cite{liu2018large} } & \textbf{Prom-PATE}& \textbf{Prom-PATE} & \textbf{Prom-PATE} & \textbf{Transfer-PATE} & \textbf{Transfer-PATE}  & \textbf{Transfer-PATE}\\ 
     \hline
     $\epsilon$ & 1.531 & 1.521 & 1.527  & 1.522 & 1.530 & 1.534\\ 
     \hline
     Rescale Ratio & 192$\times$192 & 192$\times$192 & 192$\times$192 & - & - & -\\ 
     \hline 
     Number of Re-Teachers & 1,000 & 1,000 & 2,000 & 1,000 & 1,000 & 2,000 \\ 
     \hline
     Source Model & Swin & ViT & Swin & Swin & ViT & Swin \\ 
     \hline 
     Queries & 1,000 & 1,000 & 2,000 & 1,000 & 1,000 & 2,000\\
     \hline
     Answered Queries & 378 & 359  & 517 & 376 & 330 & 535 \\ 
     \hline
     Answer Accuracy(\%) & 91.53 & 93.31 & 90.52 & 92.55 & 90.30 & 90.09 \\
     \hline
     Threshold T & 780 & 800 & 1,500 & 790 & 780 & 1,500\\  
     \hline
     $\sigma_{1}$ & 100 & 100 & 100 & 100 & 100 & 100\\ 
     \hline
     $\sigma_{2}$ & 200 & 200 & 450 & 200 & 200 & 500\\
     \hline 
     \textbf{FixMatch Accuracy ± Std(\%)} & 85.40 $\pm$ 0.40 & 87.73 $\pm$ 0.15 & 81.97 $\pm$ 0.67 & 84.73 $\pm$ 0.83 & 84.77 $\pm$ 0.31 & 81.20 $\pm$ 0.17\\
     \hline
     \textbf{FreeMatch Accuracy ± Std(\%)} & 86.23 $\pm$ 0.45 & 88.13 $\pm$ 0.35 & 83.97 $\pm$ 0.47 & 87.23 $\pm$ 0.30 & 85.03 $\pm$ 0.46 & 83.80 $\pm$ 0.53\\
     \hline 
    \end{tabular}
    }
    \end{center}
    \caption{CelebA-Hair Performance.}
    \label{tab:CelebA-Hair}
\end{table*}

\begin{table*}[hbt]
    \begin{center}
    \resizebox{2\columnwidth}{!}{
    \small
    \begin{tabular}{|c || c c c c c c|} 
     \hline
     \textbf{FFHQ-Gender \cite{karras2019style} } & \textbf{Prom-PATE}& \textbf{Prom-PATE} & \textbf{Prom-PATE} & \textbf{Transfer-PATE} & \textbf{Transfer-PATE}  & \textbf{Transfer-PATE}\\ 
     \hline
     $\epsilon$ & 1.599 & 1.605 & 1.52  & 1.604 & 1.602 & 1.562 \\ 
     \hline
     Rescale Ratio & 192$\times$192 & 192$\times$192 & 192$\times$192 & - & - & -\\ 
     \hline 
     Number of Re-Teachers & 1,000 & 1,000 & 2,000 & 1,000 & 1,000 & 2,000 \\ 
     \hline
     Source Model & Swin & ViT & Swin & Swin & ViT & Swin \\ 
     \hline 
     Queries & 1,000 & 1,000 & 2,000 & 1,000 & 1,000 & 2,000\\
     \hline
     Answered Queries & 663 & 630  & 1,331 & 653 & 620 & 1,270 \\ 
     \hline
     Answer Accuracy(\%) & 95.47 & 93.33 & 94.21 & 94.79 & 94.52 & 94.17 \\
     \hline
     Threshold T & 800 & 800 & 1,500 & 800 & 800 & 1,500\\  
     \hline
     $\sigma_{1}$ & 100 & 100 & 90 & 100 & 100 & 100\\ 
     \hline
     $\sigma_{2}$ & 200 & 200 & 450 & 200 & 200 & 500\\
     \hline 
     \textbf{FixMatch Accuracy ± Std($\%$)} & 86.93 $\pm$ 0.21 & 86.13 $\pm$ 0.25 & 85.87 $\pm$ 0.35 & 86.07 $\pm$ 0.06 & 84.47 $\pm$ 0.21 & 84.83 $\pm$ 0.40\\
     \hline
     \textbf{FreeMatch Accuracy ± Std($\%$)} & 86.77 $\pm$ 0.21 & 86.47 $\pm$ 0.05 & 86.43 $\pm$ 0.31 & 86.47 $\pm$ 0.15 & 85.50 $\pm$ 0.36 & 84.90 $\pm$ 0.98\\
     \hline 
    \end{tabular}
    }
    \end{center}
    \caption{FFHQ-Gender Performance.}
    \label{tab:FFHQ-Gender}
\end{table*}

\paragraph{Additional Experiment Results for High-Resolution Images.} Here, we present additional experiment results for high-resolution images. CelebA is a popular dataset that contains colorful celebrity images of different sizes. All of CelebA images were rescaled into $64\times 64$ colorful images in our experiments. Based on CelebA, we consider CelebA-Gender and CelebA-Hair. In particular, CelebA-Gender is for binary classification with gender as the label. CelebA-Hair is for the three-class classification with hair color (black/blonde/brown) as the label. On the other hand, FFHQ contains 70000 $128\times 128$ colorful facial images with gender as labels. 

With the comparison between Table~\ref{tab:compare} and Tables~\ref{tab:CelebA-Gender}$\sim$~\ref{tab:FFHQ-Gender}, we can see that when images of higher resolutions are considered, Prom-PATE reaches a lower accuracy. A potential explanation is that because the images from the target task will be rescaled, certain features in the images from the target task will disappear and consequently have a negative impact on the resulting accuracy. 

With the comparison among Tables~\ref{tab:CelebA-Gender}, \ref{tab:CelebA-Hair}, and ~\ref{tab:FFHQ-Gender}, we can see that Prom-PATE outperforms Transfer-PATE in nearly all cases. Nonetheless, we can also find in Table~\ref{tab:CelebA-Gender} that Prom-PATE only slightly outperforms Transfer-PATE. Such a minor victory of Prom-PATE comes from the fact that the number of labels returned by the noisy aggregation in Prom-PATE does not have a significant increase compared to that in Transfer-PATE (e.g., see the row \textit{Answered Queries} in Table~\ref{tab:CelebA-Gender}). The similarity between the numbers of labels returned by the noisy aggregation in Prom-PATE and Transfer-PATE can be attributed to the similar training result of the teacher models. A potential explanation for such a phenomenon is that the CelebA-Gender is a binary classification task, which is easy for both Prom-PATE and Transfer-PATE. By comparing Table~\ref{tab:CelebA-Gender} and Table~\ref{tab:CelebA-Hair}, we can see that the accuracy difference between Prom-PATE and Transfer-PATE is becoming clear because CelebA-Hair is a three-class classification task. When considering Table~\ref{tab:cross-domain} and Table~\ref{tab:CelebA-Hair}, despite the different context in CIFAR-10 and CelebA-Hair, we can use them as evidence for claiming that Prom-PATE works especially better than Transfer-PATE in the multi-class classification task. 

Tables~\ref{tab:CelebA-Gender}, \ref{tab:CelebA-Hair}, and ~\ref{tab:FFHQ-Gender} also report the accuracies when two different semi-supervised learning methods, FixMatch \cite{sohn2020fixmatch} and FreeMatch  \cite{wang2022freematch}, are used. We can only see the minor difference between using two SOTA semi-supervised learning methods.

\paragraph{Additional Experiment Result for Benchmark Task}

\begin{table*}[hbt]
    \label{appx:cifar100}
    \begin{center}
    \addtolength{\tabcolsep}{8pt}
    \resizebox{\columnwidth}{!}{
    \begin{tabular}{|c |c |c |c |} 
     \hline
      \textbf{CIFAR-100 \cite{krizhevsky2009learning}}& $\epsilon$ & Sanitized $\epsilon$ & Accuracy\\ 
      \hline 
      \multirow{2}{*}{De et al.~\cite{de2022unlocking}} & 4 & 4 & 79.2\% \\ 
      & 8 & 8 & 81.8\% \\
      \hline 
      \multirow{2}{*}{Bu et al.~\cite{bu2022scalable}} & 4 & 4 & 87.7\% \\ 
      & 8 & 8 & 88.4\% \\ 
      \hline
      \multirow{2}{*}{Prom-PATE} & 4.089 & 5.043 & \textbf{88.33\%} \\ 
      & 8.078 & 10.026 & \textbf{91.47\%} \\ 
      \hline 
    \end{tabular}}
    \end{center}
    \vspace{-2mm}
    \caption{Comparison among DP classifiers on CIFAR-100.}
    \label{tab:cifar-100}
    \vspace{-5mm}
\end{table*}

Here, we present an additional experiment conducted on CIFAR-100 which is considered as the new benchmark task for DP classifiers. The result and comparison against SOTA classifiers are shown in Table~\ref{tab:cifar-100}. Here, the pre-trained model for re-teachers, the pre-trained model for semi-supervised learning, and the algorithm for semi-supervised learning of Prom-PATE in Table~\ref{tab:cifar-100} are EVA~\cite{fang2023eva}, EVA~\cite{fang2023eva}, and FreeMatch~\cite{wang2022freematch}, respectively.

As one can see from Table~\ref{tab:cifar-100}, Prom-PATE achieves SOTA under a similar budget comparison against other methods, demonstrating the benefits of exploring model reprogramming for parameter-efficient fine-tuning \cite{ding2023parameter} of DP models.

\end{document}